\documentclass{article} 
\usepackage{iclr2026_conference,times}

\usepackage{amsmath,amsfonts,bm}

\def\eqref#1{equation~\ref{#1}}

\def\1{\bm{1}}

\DeclareMathAlphabet{\mathsfit}{\encodingdefault}{\sfdefault}{m}{sl}
\SetMathAlphabet{\mathsfit}{bold}{\encodingdefault}{\sfdefault}{bx}{n}

\newcommand{\E}{\mathbb{E}}

\newcommand{\Var}{\mathrm{Var}}

\newcommand{\Cov}{\mathrm{Cov}}

\usepackage{hyperref}
\usepackage{url}

\usepackage{amsmath,amssymb,amsthm,mathtools}
\newtheorem{theorem}{Theorem}
\newtheorem{proposition}[theorem]{Proposition}

\newtheorem{assumption}{Assumption}
\usepackage{algorithm}
\usepackage[noend]{algpseudocode} 
\usepackage{booktabs} 
\usepackage{wrapfig}    

\usepackage{multirow}
\usepackage{siunitx}
\usepackage{graphicx}
\usepackage{subcaption}

\usepackage[table]{xcolor} 

\usepackage[dvipsnames, svgnames]{xcolor} 
\usepackage[most]{tcolorbox}             
\usepackage{inconsolata}                 
\usepackage{soul}                        

\usepackage{enumitem}

\definecolor{bg_yellow}{RGB}{255, 252, 225} 
\definecolor{highlight_red}{RGB}{200, 0, 0} 

\newcommand{\chunklabel}[1]{\texttt{\textbf{<CHUNK\_#1>}}}
\newcommand{\evidence}[1]{\textbf{\textcolor{highlight_red}{#1}}}

\newcommand{\contextblock}[1]{%
    \colorbox{bg_yellow}{%
        \parbox{\dimexpr\linewidth-2\fboxsep\relax}{#1}%
    }%
}

\newcommand{\labeledchunk}[2]{%
    \noindent
    \begin{minipage}[t]{1.8cm}
        \vspace{0pt} 
        \chunklabel{#1}
    \end{minipage}%
    \hfill 
    \begin{minipage}[t]{\dimexpr\linewidth-1.9cm\relax}
        \vspace{0pt} 
        \contextblock{#2}
    \end{minipage}
    \par\vspace{0.3cm} 
}

\tcbuselibrary{skins}      
\tcbuselibrary{breakable}  

\definecolor{blockbg}{gray}{0.97}               
\definecolor{leftbarcolor}{RGB}{70, 130, 180}   
\definecolor{tagcolor}{gray}{0.55}              
\definecolor{usefulchunkbg}{RGB}{220, 240, 223} 

\newtcolorbox{elegantbox}[1][]{
  enhanced, 
  colback=blockbg,
  boxrule=0pt, 
  sharp corners,
  fontupper=\ttfamily,
  breakable,
  borderline west={2pt}{0pt}{leftbarcolor},
  boxsep=5pt,
  left=10pt, 
  right=5pt,
  top=5pt,
  bottom=5pt,
  #1
}

\newcommand{\usefulchunk}[1]{{\sethlcolor{usefulchunkbg}\hl{#1}}}
\newcommand{\datatag}[1]{{\color{tagcolor}#1}} 

\usepackage{amsmath}
\usepackage{amssymb}
\usepackage{amsthm}
\usepackage{amsfonts}

\theoremstyle{definition} 

\definecolor{Gray}{gray}{0.92}

\definecolor{bestcolor}{RGB}{28, 128, 28} 
\definecolor{secondbestcolor}{RGB}{204, 0, 0} 
\definecolor{avgcol}{RGB}{255, 255, 224} 

\newcommand{\best}[1]{\textcolor{bestcolor}{\textbf{#1}}}
\newcommand{\secondbest}[1]{\textcolor{secondbestcolor}{\textbf{#1}}}

\usepackage{cleveref}
\crefformat{section}{\S#2#1#3}
\crefformat{subsection}{\S#2#1#3}
\crefformat{subsubsection}{\S#2#1#3}
\crefrangeformat{section}{\S\S#3#1#4 to~#5#2#6}
\crefmultiformat{section}{\S\S#2#1#3}{ and~#2#1#3}{, #2#1#3}{ and~#2#1#3}
\Crefformat{figure}{#2Figure~#1#3}
\Crefmultiformat{figure}{Figs.~#2#1#3}{ and~#2#1#3}{, #2#1#3}{ and~#2#1#3}
\Crefformat{table}{#2Table~#1#3}
\Crefmultiformat{table}{Tabs.~#2#1#3}{ and~#2#1#3}{, #2#1#3}{ and~#2#1#3}
\Crefformat{appendix}{Appx.~\S#2#1#3}
\crefformat{algorithm}{Alg.~#2#1#3}
\Crefformat{equation}{Eq.~(#2#1#3)}
\Crefmultiformat{equation}{Eqs.~(#2#1#3)}{ and~(#2#1#3)}{, (#2#1#3)}{ and~(#2#1#3)}

\newcommand{\ourMethod}{LongRLVR}

\newcommand{\gc}[1]{{#1}}

\title{
    {\ourMethod:} \\
    Long-Context Reinforcement Learning 
    Requires Verifiable Context Rewards
}

\author{Guanzheng Chen$^{1,2,3}$\thanks{Work done during the internship of Guanzheng Chen at MiroMind AI.}  \ \ \ \ \ \ \ \ Michael Qizhe Shieh$^{1,3}\thanks{Corresponding Author.}$ \ \ \ \ \ \ \ \  Lidong Bing$^{2}$ \\
$^1$National University of Singapore\   \ \ \
$^2$MiroMind AI\  \ \ \
$^3$absolute AI \\
\small\texttt{gc.chen@u.nus.edu, michaelshieh@comp.nus.edu.sg, lidong.bing@shanda.com} 
}
\iclrfinalcopy 
\begin{document}

\maketitle

\begin{abstract}
Reinforcement Learning with Verifiable Rewards (RLVR) has significantly advanced the reasoning capabilities of Large Language Models (LLMs) by optimizing them against factual outcomes. However, this paradigm falters in long-context scenarios, as its reliance on \textit{internal} parametric knowledge is ill-suited for tasks requiring \textit{contextual grounding}—the ability to find and reason over \textit{externally} provided information. We identify a key reason for this failure: a reward based solely on the final answer is too sparse to effectively guide the model for identifying relevant evidence. We formally prove that the outcome-only reward leads to significant vanishing gradients for the context grounding process, rendering learning intractable. To overcome this bottleneck, we introduce \textbf{LongRLVR} to augment the sparse answer reward with a dense and \textit{verifiable context reward}. This auxiliary signal directly incentivizes the model for selecting the correct grounding information, providing a robust learning gradient that solves the underlying optimization challenge. We validate our method on challenging long-context benchmarks using Qwen and LLaMA models. LongRLVR consistently and significantly outperforms the standard RLVR across all models and benchmarks, e.g., boosting a 14B model's scores on RULER-QA from 73.17 to 88.90 and on LongBench v2 from 39.8 to 46.5. Our work demonstrates that explicitly rewarding the grounding process is a critical and effective strategy for unlocking the full reasoning potential of LLMs in long-context applications. Our code is available at \url{https://github.com/real-absolute-AI/LongRLVR}.
\end{abstract}

\section{Introduction}

Reinforcement Learning with Verifiable Rewards (RLVR)~\citep{lambert2024tulu,guo2025deepseek} has emerged as a pivotal paradigm in advancing the reasoning capabilities of Large Language Models (LLMs).  By rewarding verifiable outcomes, RLVR effectively steers LLMs to explore diverse reasoning pathways for achieving factually accurate and logically sound solutions. This paradigm has recently propelled LLMs, such as DeepSeek-R1~\citep{guo2025deepseek}, to achieve expert-level reasoning ability in domains like mathematics and programming~\citep{guo2025deepseek,jaech2024openai,team2025kimi, huang2025gemini}. The remarkable success of RLVR on complex reasoning makes it never more compelling for applying to the next frontier: enabling LLMs to explore and reason over vast external environment to unlock broader intelligence~\citep{zhang2025survey}. However, the interaction of LLMs with such environments necessitates processing extensive external information, which poses significant challenges on their long-context capabilities.

Effective long-context reasoning typically hinges upon robust contextual grounding: the ability to accurately retrieve and synthesize information from \textit{external} documents~\citep{wan2025qwenlong}.  Yet, recent studies~\citep{yue2025does,wen2025reinforcement} suggest that RLVR primarily sharpens the \textit{internal} knowledge that LLMs have already acquired during pretraining. This may limit the efficacy of RLVR for enhancing the long-context capabilities of LLMs. 
\begin{wrapfigure}{r}{0.52\columnwidth}
    \centering
    \includegraphics[width=0.52\columnwidth]{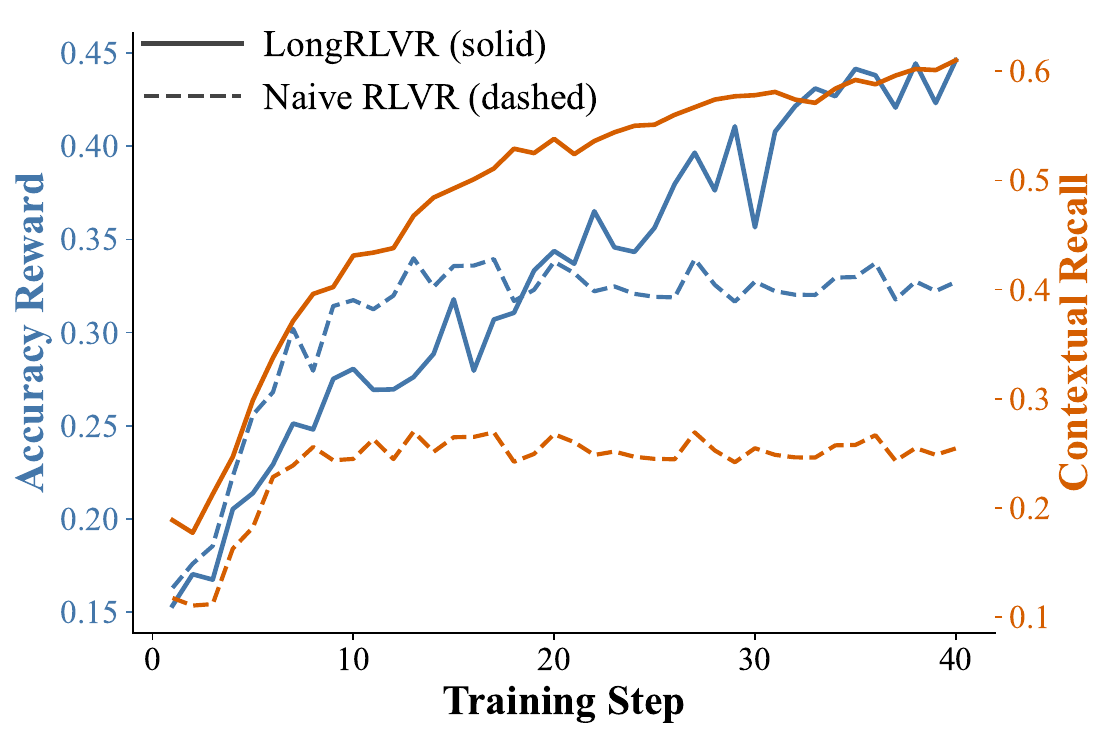}
    \vspace{-0.8cm}
    \caption{The accuracy reward and contextual recall of naive RLVR and \ourMethod{} on the training data.}
    \label{fig:overall_fig}
    \vspace{-0.3cm}
\end{wrapfigure}
As shown in~\Cref{fig:overall_fig}, when applying naive RLVR with outcome-only rewards for final answers upon long-context training, the model's contextual recall score (measured by retrieving reference chunks identifiers as detailed in~\Cref{fig:data_format}) quickly stagnates. This plateau in relevant retrieval directly creates a ceiling for answer accuracy, thus halting overall learning progress on training rewards.

In this work, we introduce \ourMethod{} to address the bottleneck of naive RLVR on long-context training. We first formally prove that the outcome-only reward causes vanishing gradients for the contextual grounding, rendering the learning to become sparse and intractable for long sequences. To address this, \ourMethod{} incorporates a \textit{context reward} into outcome-only rewards to augment the sparse learning signal on contextual grounding. Specifically, for each rollout, we steer the model to generate the grounding chunk identifiers from the long context before achieving the final answer (see~\Cref{fig:data_format}). These identifiers will be compared with ground-truth counterparts to access a verifiable reward. By explicitly rewarding the model for extracting relevant evidences, we provide a dense learning signal that mitigates the vanishing gradient issue. Therefore, our~\ourMethod{} overcomes the bottleneck of long-context RLVR training and allows both contextual recall and answer accuracy to improve continuously throughout training (see~\Cref{fig:overall_fig}).

To support the training of~\ourMethod{}, we develop a comprehensive data synthetic pipeline that produces high-quality, long-context question-answering data annotated with the necessary grounding chunks. 
We validate its effectiveness through extensive experiments on LLaMA-3.1~\citep{dubey2024llama3herdmodels} and Qwen2.5~\citep{yang2025qwen25} models across challenging long-context benchmarks such as RULER~\citep{hsieh2024ruler}, LongBench v2~\citep{bai2024longbench}, and LongReason~\citep{ling2025longreason}. Our method consistently and significantly outperforms the outcome-only RLVR baseline. For instance, 
\ourMethod{} largely catapults the score of Qwen2.5-14B-1M across all benchmarks ($73.17 \!\rightarrow\! 88.90$ on RULER-QA, $40.2  \!\rightarrow\! 46.5$ on LongBench v2, and $73.55\!\rightarrow\!78.42$ on LongReason). By successfully training models to ground their reasoning in provided context, \ourMethod{} not only overcomes the limitations of conventional RLVR but empower these models with remarkable long-context reasoning abilities comparable with, and even superior to, state-of-the-art reasoning models such as Qwen3~\citep{qwen3technicalreport} series.

\section{Method}
\label{sec:method}

In this section, we introduce \ourMethod{} to remedy the limitations of RLVR in long-context tasks. We first present an explicit grounding formulation for long-context RLVR in~\Cref{subsec:grounding_formula}. Next, in~\Cref{sec:bottleneck}, we formally prove that outcome-only rewards lead to a vanishing gradient problem for this grounding process. To solve this, we introduce our verifiable context reward, presenting its theoretical foundation in~\Cref{subsubsec:theoretical_foundation} and a practical F-score-based implementation in~\Cref{subsubsec:fscore_reward}. Finally, we detail the synthetic data generation pipeline that enables this approach in~\Cref{subsec:pipeline}.

\subsection{RLVR on Long Contexts: An Explicit Grounding Formulation}
\label{subsec:grounding_formula}
The standard RLVR framework aims to optimize a policy $\pi_\theta(y \mid X, Q)$ that generates an answer $y$ given a context $X$ and a question $Q$. The objective is to maximize the expected verifiable reward $r_{\text{ans}}(y)$, which typically evaluates the correctness of the final answer:
\begin{equation}
    J_{\text{ans}}(\theta) = \mathbb{E}_{(X,Q)\sim\mathcal{D}} \left[ \mathbb{E}_{y\sim \pi_\theta(y \mid X, Q)}[r_{\text{ans}}(y)] \right].
\end{equation}

This formulation, while effective for tasks where reasoning relies on parametric knowledge, ignores two distinct processes in long-context scenarios: (1)~\textbf{contextual grounding}, the act of identifying the relevant subset of information within $X$, and (2)~\textbf{answer generation}, the act of synthesizing an answer from the grounded information. When the context $X$ is extensive, the grounding process becomes non-trivial yet remains implicit within the monolithic policy $\pi_\theta(y \!\mid\! X, Q)$.

Here, we refactor the policy to explicitly model these two stages. Let the long context $X$ be segmented into a set of $N$ chunks, $C = \{c_1, \dots, c_N\}$, the long-context policy should jointly involve grounding and answering to identify a subset of selected chunks $Z \subseteq C$ and a final answer $y$. 
This process is modeled as a factorized distribution:
\begin{equation}
    \pi_\theta(y, Z \mid X, Q) = \underbrace{\pi_\theta^{\text{gnd}}(Z \mid X, Q)}_{\text{Grounding Head}} \cdot \underbrace{\pi_\theta^{\text{ans}}(y \mid X, Q, Z)}_{\text{Answer Head}}.
\label{eq:factorization}
\end{equation}
The \textbf{Grounding Head} is responsible for contextual grounding, selecting the evidence $Z$ required to answer the question. The \textbf{Answer Head} then conditions on this selected evidence to produce the final output $y$. 

\subsection{The Vanishing Grounding Gradient with Outcome-Only Rewards}
\label{sec:bottleneck}

We now formally analyze the learning dynamics of the factorized policy (Eq.~\ref{eq:factorization}) when optimized solely with the final answer reward, $r_{\text{ans}}(y)$. 
We will demonstrate that this outcome-only signal is insufficient for learning the grounding head ($\pi_\theta^{\text{gnd}}$), creating a fundamental bottleneck for long-context reasoning.

Our analysis is based on a common property of long-context reasoning tasks: a correct solution often requires synthesizing a complete set of prerequisite evidence. Partial information, while helpful, typically yields a lower reward.\gc{~That said, an LLM may occasionally answer correctly from a subset of $G$ or from alternative supporting evidence.} 
This structure motivates the following formal assumption.
\begin{assumption}[Sparse Answer Reward]
\label{ass:sparse}
Let $G \subseteq C$ be the ground-truth set of essential evidence chunks.\gc{~There exists a non-negative, monotone set function $f:2^{G}\!\rightarrow\mathbb{R}_{\ge 0}$ with $f(\emptyset)=0$ such that the expected answer reward conditioned on the selected set $Z$ depends only on which ground-truth chunks are present:}
\begin{equation}
    \mathbb{E}[r_{\text{ans}} \mid Z] = \mu_0 + \gc{f(Z \cap G)},
\end{equation}
\gc{where $\mu_0$ is a baseline reward from partial or spurious evidence. This form allows different chunks in $G$ to have different importance and credits arbitrary subsets $Z \cap G$.}
\end{assumption}

To analyze the gradient, we \gc{introduce a logit $s_j$ for each chunk $c_j \in C$ and denote by $z_j=\mathbf{1}\{c_j \in Z\}$ its selection indicator. Let $p_j = \Pr_\theta(c_j \in Z)=\mathbb{E}_\theta[z_j]$ be the marginal selection probability under the grounding policy, we can derive the proposition below.}


\begin{proposition}[Vanishing Gradients for Grounding]
\label{thm:ans-only}
\gc{Under Assumption~\ref{ass:sparse} and the grounding parameterization in~\Cref{eq:param_dis}, the gradient of the expected answer reward with respect to the logit $s_j$ for any essential chunk $c_j \in G$ is:}
\[
    \gc{\nabla_{s_j} \mathbb{E}[r_{\text{ans}}] = \mathrm{Cov}\!\big(f(Z \cap G),\, z_j\big)
    = p_j(1-p_j)\big(\mathbb{E}[f(Z \cap G)\mid z_j{=}1]-\mathbb{E}[f(Z \cap G)\mid z_j{=}0]\big).}
\]
\gc{Let $\Delta_j(T) \triangleq f(T \cup \{c_j\})-f(T)$ denote the marginal gain of chunk $c_j$ for any $T\subseteq G\setminus\{c_j\}$, and assume $\Delta_j(T)\le \bar\delta_j$ for some constant $\bar\delta_j>0$. Define the \emph{activation event} for $c_j$}
\[
\gc{\mathcal{E}_j \triangleq \bigl\{Z:\ \Delta_j\bigl((Z\cap G)\setminus\{c_j\}\bigr) > 0\bigr\},}
\]
\gc{i.e., the event that the rest of the prerequisite evidence that makes $c_j$ useful is already present in $Z$. Then}
\[
\gc{0 \;\le\; \nabla_{s_j} \mathbb{E}[r_{\text{ans}}] \;\le\; p_j(1-p_j)\,\bar\delta_j\,\Pr_\theta(\mathcal{E}_j).}
\]
(See proof in~\Cref{subsec:proof_1_app}.)
\end{proposition}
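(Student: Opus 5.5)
The parameterization in \Cref{eq:param_dis} is not visible at this point. I will assume it is the natural one: each chunk is selected independently, $z_j\sim\mathrm{Bernoulli}(p_j)$ with $p_j=\sigma(s_j)$. Then $\Pr_\theta(Z)=\prod_k p_k^{z_k}(1-p_k)^{1-z_k}$, and the score function is $\nabla_{s_j}\log\Pr_\theta(Z)=z_j-p_j$.

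\textbf{Covariance identity.} By the log-derivative trick,
\[
\nabla_{s_j}\mathbb{E}[r_{\text{ans}}]=\mathbb{E}\big[\mathbb{E}[r_{\text{ans}}\mid Z]\,(z_j-p_j)\big].
\]
The answer head's parameters are held fixed, so only $\pi_\theta^{\text{gnd}}$ is differentiated. Assumption~\ref{ass:sparse} gives $\mathbb{E}[r_{\text{ans}}\mid Z]=\mu_0+f(Z\cap G)$. The $\mu_0$ term drops out because $\mathbb{E}[z_j-p_j]=0$. What remains is $\mathbb{E}[f(Z\cap G)(z_j-p_j)]=\mathrm{Cov}(f(Z\cap G),z_j)$. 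For the second form, condition on $z_j$: the covariance of any random variable $W$ with a Bernoulli$(p_j)$ indicator equals $p_j(1-p_j)\big(\mathbb{E}[W\mid z_j=1]-\mathbb{E}[W\mid z_j=0]\big)$.

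\textbf{Bounds.} Write $T=(Z\cap G)\setminus\{c_j\}$. Under independence, $T$ depends only on the $z_k$ with $k\neq j$, so it is independent of $z_j$ and has the same law under either conditioning. On $\{z_j=1\}$ we have $f(Z\cap G)=f(T\cup\{c_j\})$, and on $\{z_j=0\}$ we have $f(Z\cap G)=f(T)$. Hence the conditional difference is $\mathbb{E}_T[\Delta_j(T)]$. Monotonicity of $f$ gives $\Delta_j(T)\ge 0$, which yields the lower bound $0$. For the upper bound, $\Delta_j(T)=0$ off the event $\mathcal{E}_j$ by definition, and $\Delta_j(T)\le\bar\delta_j$ on it. So $\mathbb{E}_T[\Delta_j(T)]\le\bar\delta_j\Pr(\Delta_j(T)>0)$. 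The last probability is exactly $\Pr_\theta(\mathcal{E}_j)$, because $\mathcal{E}_j$ depends only on $T$ and therefore has the same probability unconditionally. Multiplying by $p_j(1-p_j)$ gives the claim.

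\textbf{Main obstacle.} The crux is the independence step: I need $T\perp z_j$ so that the two conditional expectations are taken over the same distribution of $T$. This holds for a product-Bernoulli (logit-per-chunk) grounding head. If \Cref{eq:param_dis} is instead a correlated or sequential parameterization, the score is no longer $z_j-p_j$, and conditioning on $z_j$ shifts the law of $T$. In that case I would try first to verify that the stated parameterization factorizes over chunks, at least marginally in $s_j$. Failing that, I would state the bound with $\Pr_\theta(\mathcal{E}_j\mid z_j=1)$ and argue the difference term separately.
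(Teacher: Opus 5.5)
For the independent-Bernoulli model your proof is complete and correct, and it follows the paper's argument step for step: the score $z_j-p_j$, the covariance identity, the Bernoulli decomposition, the split $f(T\cup\{c_j\})=f(T)+\Delta_j(T)$, monotonicity for the lower bound, and $0\le\Delta_j\le\bar\delta_j$ on $\mathcal{E}_j$ (zero off it) for the upper bound.

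You did not have \Cref{eq:param_dis} in front of you. It is the log-linear family $\pi_\theta^{\mathrm{gnd}}(Z)\propto\exp\bigl(\sum_k s_k z_k+\psi(Z)\bigr)$ with an arbitrary coupling potential $\psi$, and you treated only the case $\psi\equiv 0$. Contrary to your last paragraph, the score in this family is still exactly $z_j-p_j$, because $\partial_{s_j}\log Z(\theta)=\mathbb{E}_\theta[z_j]$. So the covariance identity and the conditional-difference formula survive correlations; only the bounding step is affected.

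That step does break. The paper's proof writes the conditional difference as $\mathbb{E}[\Delta_j(T)\mid z_j{=}1]$, silently dropping $\mathbb{E}[f(T)\mid z_j{=}1]-\mathbb{E}[f(T)\mid z_j{=}0]$. It then uses $\Pr_\theta(\mathcal{E}_j\mid z_j{=}1)\le\Pr_\theta(\mathcal{E}_j)$. Both hold under your $T\perp z_j$, and neither holds in general.

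Without that independence the statement is false:
\begin{itemize}
\item Take $C=G=\{c_1,c_2\}$, $f(T)=\mathbf{1}\{c_2\in T\}$ (nonnegative, monotone, $f(\emptyset)=0$), and $\psi(Z)=w z_1 z_2$.
\item Then $\Delta_1\equiv 0$, so $\mathcal{E}_1=\emptyset$, and the claimed bounds force $\nabla_{s_1}\mathbb{E}[r_{\text{ans}}]=0$.
\item But this gradient is $\mathrm{Cov}(z_2,z_1)$, which equals $(e^w-1)/(3+e^w)^2$ at $s_1=s_2=0$. It is positive for $w>0$ and negative for $w<0$.
\end{itemize}
This is not a degenerate artifact. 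For the all-or-nothing reward $f(T)=\delta\,\mathbf{1}\{T\supseteq G\}$ one gets exactly $\nabla_{s_j}\mathbb{E}[r_{\text{ans}}]=\delta\,p_j(1-p_j)\Pr_\theta(\mathcal{E}_j\mid z_j{=}1)$. With $\bar\delta_j=\delta$, this exceeds the claimed bound whenever selecting $c_j$ makes selecting all of $G\setminus\{c_j\}$ more likely.

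Your fallback can therefore only half-succeed. The $\Delta_j$ part is indeed bounded by $p_j(1-p_j)\bar\delta_j\Pr_\theta(\mathcal{E}_j\mid z_j{=}1)$. The extra term $\mathbb{E}[f(T)\mid z_j{=}1]-\mathbb{E}[f(T)\mid z_j{=}0]$, however, is not controlled by the hypotheses and cannot be argued away.

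The right repair is to promote the condition you isolated to a hypothesis. For instance, assume $\psi$ does not depend on $z_j$. Then $\pi_\theta^{\mathrm{gnd}}$ factorizes into a function of $z_j$ times a function of the other indicators, so $z_j$ is independent of them. Under that hypothesis your proof is a complete, correct version of the paper's argument.
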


\gc{Proposition~\ref{thm:ans-only} shows that the learning signal for selecting any single required chunk $c_j$ is scaled by $\Pr_\theta(\mathcal{E}_j)$---the probability that \emph{all of the other prerequisite evidence that interacts with $c_j$ has already been selected}. In challenging long-context tasks where correctly answering the question requires combining many pieces of \textit{implicit} evidence, this activation event is extremely unlikely under the initial RLVR policy: a single rollout must simultaneously include a large subset of $G$ before $c_j$ can receive positive credit. Consequently, the answer-only gradient for $c_j$ is suppressed by the tiny factor $\Pr_\theta(\mathcal{E}_j)$ and becomes effectively zero for many ground-truth chunks early in training. Once these gradients vanish due to small standard deviation of context rewards~\citep{razin2023vanishing}, the grounding head is non-trivial to increase the selection probability of the corresponding evidence, causing contextual recall to stagnate and inducing the plateau in training reward observed in Figure~\ref{fig:overall_fig}.}

\subsection{LongRLVR: Learning with a Verifiable Context Reward}
\label{subsec:longrlvr_method}
To surmount the vanishing gradient problem introduced in~\Cref{sec:bottleneck}, we propose augmenting the sparse, outcome-only reward with a direct, dense signal that supervises the grounding head. The core is the incorporation of a \textbf{verifiable context reward}, $r_{\text{ctx}}$, which provides a granular learning signal for the contextual grounding process.

\subsubsection{Theoretical Foundation}
\label{subsubsec:theoretical_foundation}

\gc{We begin by defining a general class of context rewards as any function that increases whenever an additional ground-truth chunk in $G$ is correctly selected, i.e., a reward that is monotone in the \emph{set} $Z \cap G$ rather than only in its cardinality. Different chunks may contribute different amounts.} For analytical tractability, we consider a simple additive form\gc{~that assigns a (possibly distinct) weight to each ground-truth chunk:}
\begin{equation}
    \gc{r_{\text{ctx}}(Z, G) = \sum_{c_k \in G} \alpha_k \, \mathbf{1}\{c_k \in Z\},}
\end{equation}
\gc{where $\alpha_k > 0$ controls the contribution of chunk $c_k$.} This formulation ensures the policy receives positive feedback for each relevant chunk it selects, irrespective of whether the complete evidence set $G$ is recovered.

The final reward in the~\ourMethod{} framework is a linear combination of the answer and context rewards:
\begin{equation}
    r_{\text{total}}(y, Z) = r_{\text{ans}}(y) + r_{\text{ctx}}(Z, G).
\end{equation}
We then prove this general structure is sufficient to provably resolve the vanishing gradient problem.

\begin{proposition}[Non-Vanishing Grounding Signal]
\label{thm:ctx-signal}
\gc{For the total reward $r_{\text{total}} = r_{\text{ans}} + r_{\text{ctx}}$ with $r_{\text{ctx}}(Z,G)=\sum_{c_k\in G}\alpha_k \mathbf{1}\{c_k\in Z\}$, the gradient of the expected total reward with respect to the logit $s_j$ for any essential chunk $c_j \in G$ is (see proof in~\Cref{subsec:proof_2_app})}
\[
    \gc{
    \nabla_{s_j} \mathbb{E}[r_{\text{total}}]
    = \underbrace{\nabla_{s_j} \mathbb{E}[r_{\text{ans}}]}_{\text{From } r_{\text{ans}}}
    + \underbrace{\alpha_j \,\mathrm{Var}(z_j) + \sum_{\substack{k\neq j\\ c_k\in G}} \alpha_k\,\mathrm{Cov}(z_k,z_j)}_{\text{From } r_{\text{ctx}}}.
    }
\]
\gc{In particular, combining this with Proposition~\ref{thm:ans-only} shows that the answer-only term is at most $p_j(1-p_j)\,\bar\delta_j\,\Pr_\theta(\mathcal{E}_j)$, while the context term always contains the \emph{dense} component $\alpha_j\,\mathrm{Var}(z_j)=\alpha_j\,p_j(1-p_j)$ that is not multiplied by $\Pr_\theta(\mathcal{E}_j)$. If the grounding policy tends to select related chunks together (so that $\mathrm{Cov}(z_k,z_j)\ge 0$ for $k\neq j$), the cross-covariance terms further strengthen this signal.}
\end{proposition}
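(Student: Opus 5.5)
The plan is to reuse the score-function identity that underlies Proposition~\ref{thm:ans-only}, and then exploit linearity of expectation in the reward. Under the grounding parameterization in~\Cref{eq:param_dis}, the logit $s_j$ enters $\pi_\theta^{\text{gnd}}(Z\mid X,Q)$ in exponential-family form. This gives $\nabla_{s_j}\log \pi_\theta^{\text{gnd}}(Z\mid X,Q) = z_j - p_j$. The answer head $\pi_\theta^{\text{ans}}$ does not depend on $s_j$.

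\textbf{Step 1 (covariance identity).} Take any reward $R(y,Z)$ and let $\bar R(Z)=\mathbb{E}[R\mid Z]$. Differentiating the factorized expectation from~\Cref{eq:factorization} gives
\[
\nabla_{s_j}\mathbb{E}[R]=\mathbb{E}_Z\!\big[\bar R(Z)\,(z_j-p_j)\big]=\mathrm{Cov}\big(\bar R(Z),z_j\big).
\]
The second equality holds because $\mathbb{E}[z_j-p_j]=0$. This is the same identity used in the first display of Proposition~\ref{thm:ans-only}, and I will take it as given from there.

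\textbf{Step 2 (linearity).} Since $r_{\text{total}}=r_{\text{ans}}+r_{\text{ctx}}$ and both expectation and covariance are linear, we get
\[
\nabla_{s_j}\mathbb{E}[r_{\text{total}}]=\nabla_{s_j}\mathbb{E}[r_{\text{ans}}]+\mathrm{Cov}(r_{\text{ctx}},z_j).
\]
The reward $r_{\text{ctx}}$ is a deterministic function of $Z$, so Step~1 applies to it directly with $\bar R=r_{\text{ctx}}$. Expanding $r_{\text{ctx}}=\sum_{c_k\in G}\alpha_k z_k$ by bilinearity gives $\sum_{c_k\in G}\alpha_k\mathrm{Cov}(z_k,z_j)$. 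Splitting off the term $k=j$ yields $\alpha_j\mathrm{Var}(z_j)+\sum_{k\neq j,\,c_k\in G}\alpha_k\mathrm{Cov}(z_k,z_j)$. Because $z_j$ is a Bernoulli indicator with mean $p_j$, we have $\mathrm{Var}(z_j)=p_j(1-p_j)$.

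\textbf{Step 3 (the ``in particular'' remarks).}
\begin{itemize}
\item The bound $0\le \nabla_{s_j}\mathbb{E}[r_{\text{ans}}]\le p_j(1-p_j)\bar\delta_j\Pr_\theta(\mathcal{E}_j)$ is quoted directly from Proposition~\ref{thm:ans-only}.
\item The dense term $\alpha_j p_j(1-p_j)$ is strictly positive whenever $p_j\in(0,1)$, and it involves no factor $\Pr_\theta(\mathcal{E}_j)$.
\item If $\mathrm{Cov}(z_k,z_j)\ge 0$ for all $k\neq j$, then every cross term is nonnegative because $\alpha_k>0$. Hence the context contribution is at least $\alpha_j p_j(1-p_j)$.
\end{itemize}
Under a fully factorized (independent Bernoulli) grounding head, the cross covariances vanish and the bound holds with equality.

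\textbf{Main obstacle.} The only delicate point is Step~1: justifying $\nabla_{s_j}\log\pi^{\text{gnd}}_\theta = z_j-p_j$ for the specific parameterization in~\Cref{eq:param_dis}, and justifying that the answer head has no dependence on $s_j$. If~\Cref{eq:param_dis} is an exponential-family model with sufficient statistic $z_j$ for $s_j$, the identity follows from differentiating the log-partition function, since $\partial_{s_j}\log \mathcal{Z}=\mathbb{E}[z_j]=p_j$. I would state this explicitly and otherwise defer to the derivation already used in the proof of Proposition~\ref{thm:ans-only}.
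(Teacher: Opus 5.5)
Your proposal is correct and follows essentially the same route as the paper. You use the score-function/covariance identity $\nabla_{s_j}\mathbb{E}[R]=\mathrm{Cov}(R,z_j)$, split off the answer term by linearity, and expand $\mathrm{Cov}(r_{\text{ctx}},z_j)$ by bilinearity, isolating the $k=j$ term as $\alpha_j p_j(1-p_j)$. Conditioning on $Z$ via $\bar R(Z)=\mathbb{E}[R\mid Z]$ and noting that the answer head does not depend on $s_j$ make your write-up slightly more careful than the paper's. The paper additionally states a GRPO version, which differs only by the factor $(K-1)/K$.
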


\gc{The second term in Proposition~\ref{thm:ctx-signal} thus provides a dense learning signal for each chunk that is independent of the rare activation event $\mathcal{E}_j$, preventing the gradient from vanishing even when the answer-only component is negligible. This theoretical foundation establishes that rewarding intermediate grounding steps---at the level of actual chunks rather than just outcome correctness---is a sound and effective strategy for overcoming the learning bottleneck in long-context RLVR.}


\begin{figure}[!t]
    \centering
    \begin{elegantbox}[width=0.9\columnwidth]
    \small
\datatag{<long\_context>} \\
  \datatag{<CHUNK\_1>} \usefulchunk{Marie Curie was born in Warsaw, Poland... she moved to Paris to pursue higher education...} \datatag{</CHUNK\_1>} \\
  \datatag{<CHUNK\_2>} The Curies' early research was inspired by Henri Becquerel's 1896 discovery... \datatag{</CHUNK\_2>} \\
  ... \\
  \datatag{<CHUNK\_N>} \usefulchunk{In December 1898, they announced the discovery of a second element, "radium,"...} \datatag{</CHUNK\_N>} \\
\datatag{</long\_context>}

\tcbline 

Question: Where was Marie Curie born and what was the second radioactive element she co-discovered?

\tcbline 
Output:\\
\datatag{<think>} ... \datatag{</think>}\\
\\
\datatag{<useful\_chunks>} \usefulchunk{<CHUNK\_1>, <CHUNK\_N>} \datatag{</useful\_chunks>} \\
\\
\datatag{<answer>} Marie Curie was born in Warsaw, Poland, and the second radioactive element she co-discovered was radium. \datatag{</answer>}
    \end{elegantbox}
    \caption{Data format for LongRLVR training. The model is tasked to retrieve useful chunks from the long context before generating the final answer. These chunk identifiers are utilized to derive verifiable context rewards.}
    \label{fig:data_format}
\end{figure}

The second term provides a dense learning signal for each chunk that is independent of the joint success probability $q$, preventing the gradient from vanishing. This theoretical foundation establishes that rewarding intermediate grounding steps is a sound and effective strategy for overcoming the learning bottleneck in long-context RLVR.

\subsubsection{A Practical Instantiation: The Modulated F-Score Reward}
\label{subsubsec:fscore_reward}

While our general formulation guarantees a non-vanishing gradient, a well-designed, normalized reward is crucial for stable and effective training. A naive metric like recall ($|Z \cap G|/|G|$) is insufficient, as it would incentivize a degenerate policy of selecting all available chunks. A practical reward must balance the retrieval of correct evidence (recall) with the avoidance of irrelevant information (precision).

To this end, we adopt the $F_{\beta}$-score as the core measure of grounding quality. The $F_{\beta}$-score is the weighted harmonic mean of precision and recall:
\begin{equation}
    F_{\beta}(Z, G) = (1 + \beta^2) \frac{\text{Precision}(Z, G) \cdot \text{Recall}(Z, G)}{(\beta^2 \cdot \text{Precision}(Z, G)) + \text{Recall}(Z, G)},
\end{equation}
where $\beta$ is a parameter that allows us to weigh recall more heavily than precision (e.g., $\beta=2$), ensuring the model is primarily incentivized to gather all necessary evidence.

To create a synergistic effect between grounding and final answer accuracy, we formulate our context reward as a modulated combination of the $F_{\beta}$-score and the answer reward:
\begin{equation}
\label{eq:context_reward}
    r_{\text{ctx}}(y, Z, G) = \eta \cdot F_{\beta} (Z, G) + (1 - \eta) \cdot r_{\text{ans}}(y) \cdot F_{\beta} (Z, G),
\end{equation}
where $\eta \in [0, 1]$ is a blending hyperparameter. This reward structure has two key components: (1) \textbf{Unconditional Grounding Reward ($\eta \cdot F_{\beta}$)}: This term provides a dense, stable reward for selecting correct evidence, ensuring the grounding head always receives a learning signal. (2) \textbf{Synergistic Success Reward ($(1 - \eta) \!\cdot\! r_{\text{ans}} \!\cdot\! F_{\beta}$)}: This component acts as a synergistic gate, ensuring that the full reward for high-quality grounding is unlocked only upon generating a correct answer. It incentivizes the model to treat accurate grounding as a means to a correct final answer, unifying both objectives and preventing the policy from perfecting grounding in isolation.



With our proposed context reward, the final LongRLVR objective is to maximize the expected total reward over the data distribution and the stochastic policy:
\begin{equation}
    J(\theta) = \mathbb{E}_{(X,Q,G)\sim\mathcal{D}} \left[ \mathbb{E}_{(Z, y)\sim \pi_\theta(Z, y \mid X, Q)} \left[ r_{\text{ans}}(y) + r_{\text{ctx}}\left(y, Z, G\right) \right] \right].
\end{equation}
This objective can be optimized using standard policy gradient algorithms such as PPO and GRPO. To facilitate the computation of $r_{\text{ctx}}$, we design the policy to first generate a list of identifiers for the selected chunks ($Z$) before generating the final answer ($y$), as illustrated in~\Cref{fig:data_format}.


\subsection{Synthetic Data Generation for Grounded QA}
\label{subsec:pipeline}

Training LongRLVR necessitates a specialized dataset comprising tuples of $(X, Q, G, y)$, where $G$ is the ground-truth set of evidence chunks from context $X$ essential for answering question $Q$ with answer $y$. As such datasets are exceedingly rare, we developed the automated pipeline detailed in Algorithm~\ref{alg:data_generation} to produce high-fidelity, challenging QA pairs with precise grounding annotations. This pipeline is crucial for the direct supervision of the contextual grounding mechanism in our model. 

\begin{algorithm}[h]
\small
\caption{Synthetic Data Generation Pipeline for Grounded QA}
\label{alg:data_generation}
\begin{algorithmic}[1]
\State \textbf{Input:} A collection of long documents $\mathcal{X}$.
\State \textbf{Output:} A filtered dataset $\mathcal{D} = \{(X, Q, G, y)\}$.

\For{each document $X \in \mathcal{X}$}
    \State \textbf{// Step 1: Semantic Clustering and Evidence Identification}
    \State Partition $X$ into a set of text chunks $C = \{c_1, \dots, c_N\}$.
    \State Embed all chunks into a dense vector space using a sentence encoder.
    \State Apply a density-based clustering algorithm to the embeddings to form thematic clusters $\mathcal{K} = \{K_1, K_2, \dots\}$.
    
    \State \textbf{// Step 2: Per-Cluster QA Generation and Scoring}
\State Initialize a set of best-per-cluster candidates, $\mathcal{S}_{\text{doc}} \leftarrow \emptyset$.
\For{each cluster $K_i \in \mathcal{K}$}
    \State \textbf{Generate Candidates:} Prompt a generator LLM with the content of $K_i$ to synthesize $k$ candidate tuples $\{(Q_j, y_j, G_j)\}_{j=1}^k$.
    \State \Comment{Crucially, the LLM itself identifies the necessary evidence $G_j \subseteq K_i$ for each QA pair.}
    \State \textbf{Score Candidates:} For each candidate tuple, use a verifier LLM to assign a quality score $s_j$ based on question clarity, answer fidelity, and evidence necessity.
    \State \textbf{Intra-Cluster Selection (Stage 1):} Identify the candidate $(Q_i^*, y_i^*, G_i^*)$ with the highest score $s_i^*$ within the cluster.
    \State Add the highest-scoring tuple $(Q_i^*, y_i^*, G_i^*, s_i^*)$ to $\mathcal{S}_{\text{doc}}$.
\EndFor

\State \textbf{// Step 3: Inter-Cluster Selection and Finalization (Stage 2)}
\State Select the tuple $(Q^*, y^*, G^*)$ from $\mathcal{S}_{\text{doc}}$ that has the overall highest score, breaking ties randomly.
\State Add the final, document-best tuple $(X, Q^*, G^*, y^*)$ to the dataset $\mathcal{D}$.
\EndFor
\State \textbf{return} $\mathcal{D}$
\end{algorithmic}
\end{algorithm}

This automated, multi-stage pipeline enables the scalable creation of challenging long-context QA examples with the explicit evidence annotations required to compute our verifiable context reward.

\section{Experimental Setup}

\subsection{Implementation Details}
\label{subsec:impl}

\paragraph{Data Curation.}
To train our model, we constructed a large-scale, high-quality dataset of 46K long-context question-answering pairs with explicit grounding annotations. We sourced documents from book, arXiv, and code domains, filtering for lengths between 8K and 64K tokens. Following the pipeline detailed in ~\Cref{alg:data_generation}, we first identified semantically coherent clusters of text segments within each document. For each document, we then used a powerful generator model, Qwen3-235B-A22B~\citep{qwen3technicalreport}, to create multiple candidate QA pairs, with each answer grounded in specific evidence segments. To ensure the highest quality, the same model was used as a judge to score the correctness and evidence relevance of each pair. A two-stage rejection sampling process selected the single best QA pair per document, and we applied a strict final filter, retaining only pairs with a quality rating above 9 out of 10. 
See more details in~\Cref{app:data_curation_app}.

\vspace{-0.25cm}

\paragraph{Training Details.} We train three models: LLaMA-3.1-8B, Qwen2.5-7B-1M, and Qwen2.5-14B-1M~\footnote{All models refer to the instruct version.} with RLVR implemented by naive Group Relative Policy Optimization (GRPO)~\citep{shao2024deepseekmath}. Crucially, before training of each model, we exclude easy questions for which its answer upon full long context is rated 8 or higher by a Qwen3-A235B-A22B judge.
For the RL training, we use the AdamW optimizer with a constant learning rate of 1e-6 and a 5-step linear warmup. During rollouts, we use a prompt batch size of 512 and sample 8 responses per prompt, with a maximum context length of 64K and a response length of 4096. We train all models for one epoch on 46K crafted data. For hyperparameters, we set $\eta$ as 0.1 and $\beta$ as 2 in~\Cref{eq:context_reward}.

\vspace{-0.1cm}

\subsection{Evaluation Protocol}

\label{subsec:evaluation}

\vspace{-0.1cm}
\paragraph{Baselines.} We compare \ourMethod{} against two controlled baselines: Supervised Fine-Tuning (SFT) and naive RLVR (GRPO). All methods are applied to LLaMA-3.1-8B, Qwen2.5-7B-1M, and Qwen2.5-14B-1M, using the same synthetic training data. To contextualize performance, we also report scores for leading open-source models (LLaMA-3.1-70B, Qwen2.5-72B, Qwen3 series) and a specialized long-context baseline, QwenLong-L1-32B~\citep{wan2025qwenlong}. The context windows of Qwen2.5-72B and Qwen3 models are extended to 128K using YaRN~\citep{peng2023yarn}, while Qwen3 models are evaluated in their thinking mode.

\vspace{-0.25cm}

\paragraph{Benchmarks.} We evaluate all models on three challenging long-context QA benchmarks:
(1) \textbf{RULER-QA}~\citep{hsieh2024ruler}: A synthetic benchmark testing multi-hop reasoning over arbitrary context length. We focus on this QA task with the lengths of 32K, 64K, and 128K.
(2) \textbf{LongBench v2}~\citep{bai2024longbench}: A realistic multi-choice QA benchmark on documents up to 128K tokens. Standard baselines are evaluated with CoT, while models that output reasoning steps (ours and the Qwen3 series) are evaluated on their final answer.
(3) \textbf{LongReason}~\citep{ling2025longreason}: A synthetic multi-choice benchmark designed for controllable evaluation of long-context reasoning. We evaluate the lengths of 32K, 64K, and 128K.


\vspace{-0.1cm}

\begin{table}[!t]
\centering
\sisetup{detect-weight,mode=text} 
\renewcommand{\bfseries}{\fontseries{b}\selectfont} 
\newcommand{\grayrow}{\rowcolor[gray]{0.92}} 
\caption{The evaluation of models on long-context benchmarks.  The metric in all benchmarks is accuracy. The best score across all models is highlighted in \best{green}, and the second-best is in \secondbest{red}. Additionally, the best score within each trained model comparing among SFT, RLVR, and our~\ourMethod{} is bolded.}
\label{tab:combined_results}
\renewcommand\arraystretch{1.2}
    \LARGE
\resizebox{0.95\textwidth}{!}{%
\begin{tabular}{
  l
  S[table-format=2.1] S[table-format=2.1] S[table-format=2.1] >{\columncolor{avgcol}}S[table-format=2.2]
  S[table-format=2.1] S[table-format=2.1] S[table-format=2.1] >{\columncolor{avgcol}}S[table-format=2.1]
  S[table-format=2.2] S[table-format=2.2] S[table-format=2.2] >{\columncolor{avgcol}}S[table-format=2.2]
}
\toprule
& \multicolumn{4}{c}{\textbf{RULER-QA}} & \multicolumn{4}{c}{\textbf{LongBench v2}} & \multicolumn{4}{c}{\textbf{LongReason}} \\
\cmidrule(lr){2-5} \cmidrule(lr){6-9} \cmidrule(lr){10-13}
\textbf{Model} & {\textbf{32K}} & {\textbf{64K}} & {\textbf{128K}} & \multicolumn{1}{>{\columncolor{avgcol}}c}{\textbf{AVG}} & {\textbf{Short}} & {\textbf{Medium}} & {\textbf{Long}} & \multicolumn{1}{>{\columncolor{avgcol}}c}{\textbf{Overall}} & {\textbf{32k}} & {\textbf{64k}} & {\textbf{128k}} & \multicolumn{1}{>{\columncolor{avgcol}}c}{\textbf{Avg.}} \\
\midrule
LLaMA-3.1-70B & 70.4 & 64.2 & 47.6 & 60.73 & 36.2 & \secondbest{45.0} & 34.0 & 25.9 & 61.16 & 63.30 & 48.30 & 57.59 \\
Qwen2.5-72B-YaRN & 66.9 & 54.5 & 47.2 & 56.20 & 43.5 & \best{48.9} & \best{40.9} & 43.5 & 74.27 & 74.53 & 69.48 & 72.76 \\
Qwen3-8B (Thinking) & 86.5 & 84.0 & 81.8 & 84.10 & 43.3 & 28.8 & 32.4 & 37.6 & 77.23 & 71.28 & 65.99 & 71.50 \\
Qwen3-14B (Thinking) & \secondbest{91.2} & \best{89.0} & \secondbest{82.6} & \secondbest{87.60} & 51.7 & 42.3 & \secondbest{38.9} & \secondbest{44.9} & 80.86 & 77.08 & 74.56 & 77.50 \\
QwenLong-L1-32B & 89.0 & 77.0 & 72.4 & 79.47 & \secondbest{53.3} & 34.4 & 33.3 & 41.0 & \best{84.13} & \best{83.63} & 75.06 & \best{80.94} \\
\midrule
LLaMA-3.1-8B & 65.8 & 63.7 & 58.8 & 62.77 & 34.4 & \textbf{31.6} & 21.3 & 30.4 & 51.45 & 49.94 & 46.53 & 49.31 \\
 -SFT & 68.4 & 65.3 & 60.4 & 64.70 & 36.1 & 28.4 & 28.7 & 31.2 & 50.88 & 49.11 & 48.87 & 49.62 \\
 -RLVR & 72.0 & 68.8 & 62.6 & 67.80 & 35.6 & 31.2 & 29.6 & 32.4 & 49.87 & 49.62 & 49.37 & 49.62 \\
\grayrow  -LongRLVR & \textbf{85.5} & \textbf{76.5} & \textbf{79.0} & \textbf{80.33} & \textbf{41.1} & 30.7 & \secondbest{\textbf{38.9}} & \textbf{36.2} & \textbf{51.89} & \textbf{51.01} & \textbf{56.80} & \textbf{53.23} \\
\midrule
Qwen2.5-7B-1M & 70.5 & 66.0 & 58.5 & 65.00 & 37.8 & 31.2 & 28.7 & 33.0 & 66.75 & 66.25 & 66.36 & 66.45 \\
 -SFT & 72.4 & 64.2 & 56.8 & 64.47 & 36.7 & 32.6 & 28.7 & 33.2 & 68.64 & 66.83 & 66.62 & 67.36 \\
 -RLVR & 74.4 & 68.5 & 57.8 & 66.90 & 37.2 & 29.3 & 30.6 & 32.4 & 70.78 & 69.02 & 68.01 & 69.27 \\
\grayrow  -LongRLVR & \textbf{82.5} & \textbf{76.5} & \textbf{77.0} & \textbf{78.67} & \textbf{45.6} & \textbf{35.8} & \textbf{32.4} & \textbf{38.6} & \textbf{80.35} & \secondbest{\textbf{79.47}} & \best{\textbf{77.83}} & \secondbest{\textbf{79.22}} \\
\midrule
Qwen2.5-14B-1M & 90.6 & 70.6 & 64.4 & 75.20 & 51.7 & 34.0 & 33.3 & 40.2 & 75.44 & 71.79 & 73.42 & 73.55 \\
 -SFT & 88.0 & 66.5 & 62.2 & 72.23 & 48.9 & 34.9 & 33.3 & 39.6 & 74.18 & 70.03 & 69.27 & 71.16 \\
 -RLVR & 86.3 & 69.0 & 64.2 & 73.17 & 48.3 & 36.7 & 31.5 & 39.8 & 74.06 & 71.91 & 71.03 & 72.33 \\
\grayrow  -LongRLVR & \best{\textbf{95.4}} & \secondbest{\textbf{87.8}} & \best{\textbf{83.5}} & \best{\textbf{88.90}} & \best{\textbf{55.6}} & \textbf{43.3} & \textbf{38.0} & \best{\textbf{46.5}} & \secondbest{\textbf{81.23}} & \textbf{77.96} & \secondbest{\textbf{76.07}} & \textbf{78.42} \\
\bottomrule
\end{tabular}
}
\end{table}

\section{Results and Analyses}

\subsection{Main Results}

\begin{figure}[!t]
    \centering
    \begin{minipage}{0.32\textwidth}
        \includegraphics[width=\linewidth]{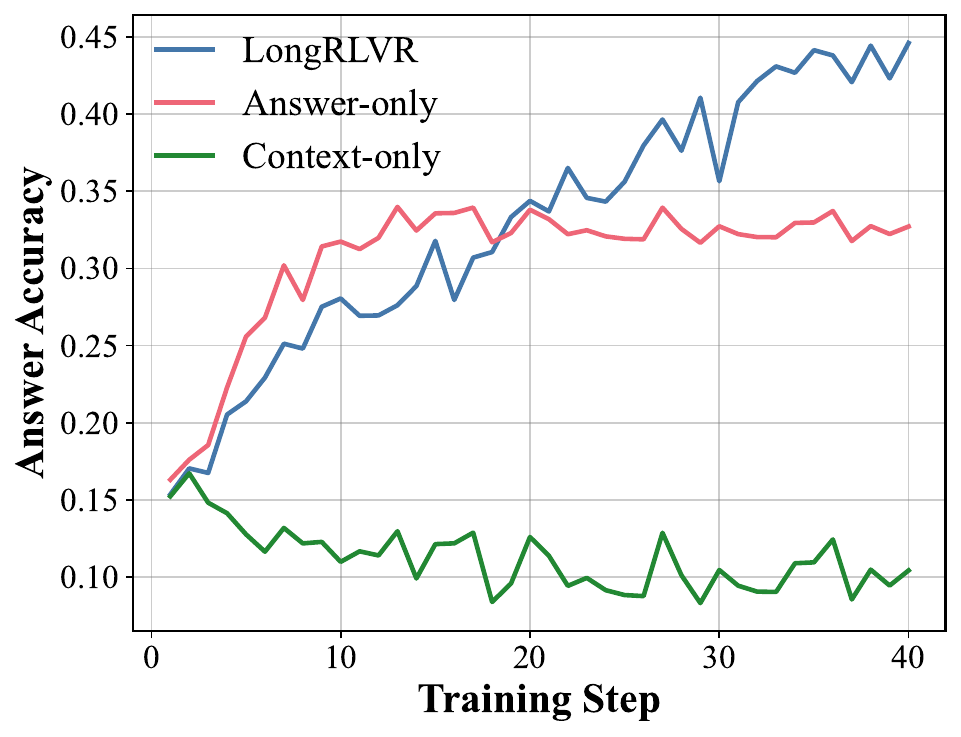}
    \end{minipage}
    \hfill
    \begin{minipage}{0.32\textwidth}
        \includegraphics[width=\linewidth]{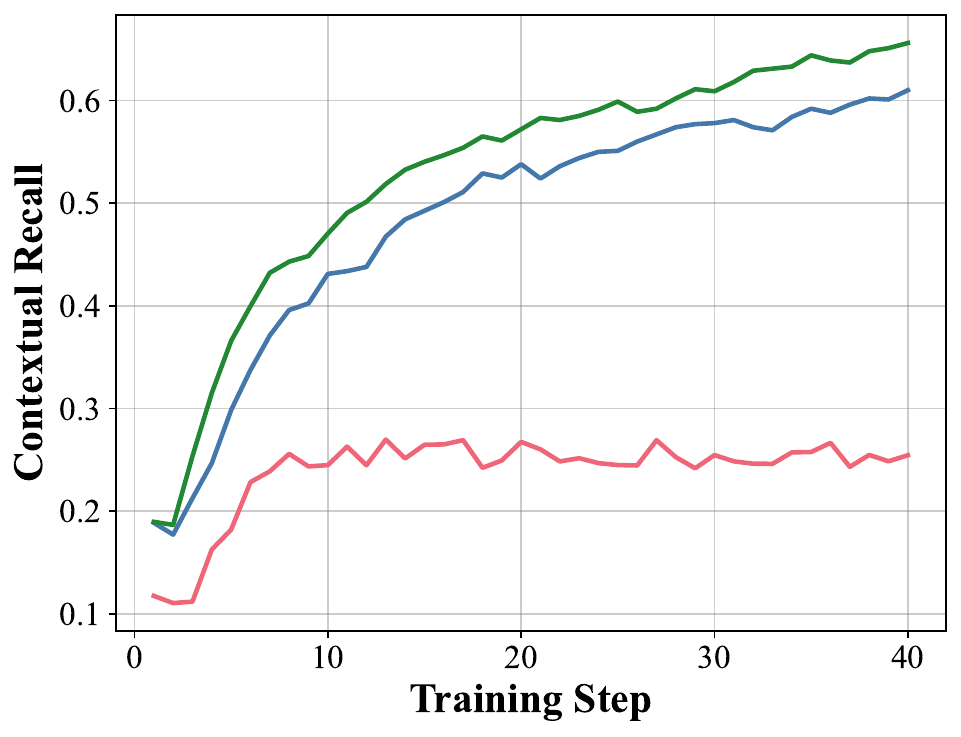}
    \end{minipage}
    \hfill
    \begin{minipage}{0.32\textwidth}
        \includegraphics[width=\linewidth]{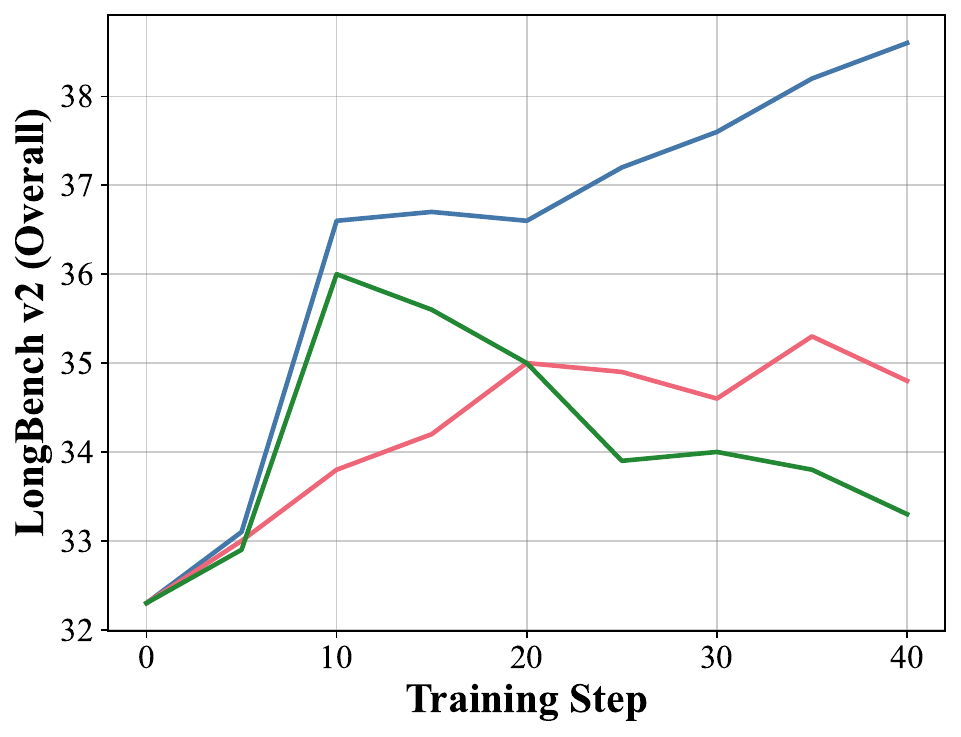}
    \end{minipage}
    \caption{Study on reward components. The answer-only model suffers from stagnating contextual recall, which caps its final performance. The context-only model excels at recall but fails to achieve accurate rewards. By synergizing both signals, Qwen2.5-7B-1M-LongRLVR achieves the best and most stable performance on the LongBench v2 benchmark, proving that both rewards are essential.
    }
    \label{fig:context_reward_only}
\end{figure}

\vspace{-0.1cm}
In~\Cref{tab:combined_results}, we present the comprehensive evaluation of~\ourMethod{} against various baselines. The results reveal the exceptional effectiveness of our approach, which we analyze through two critical comparisons: (1) against naive SFT and RLVR baselines to demonstrate consistent and substantial gains, and (2) against superior LLMs to establish its competitiveness.

\paragraph{Consistent and substantial gains over naive SFT and RLVR.}

\vspace{-0.2cm}
\ourMethod{} consistently and substantially outperforms both SFT and naive RLVR when applied to the same base models with identical training data. This is established across different model families (LLaMA and Qwen) and scales (7B, 8B, and 14B), confirming the general applicability of our approach. For instance, \ourMethod{} achieves large gains over naive RLVR across all benchmarks and models: for Qwen2.5-14B-1M (e.g., 46.5 vs. 39.8 on LongBench v2), Qwen2.5-7B-1M (e.g., 38.6 vs. 32.4 on LongBench v2), and LLaMA-3.1-8B (e.g., 36.2 vs. 32.4 on LongBench v2). The consistency of these large gains provides strong evidence that \ourMethod{} effectively remedies the fundamental limitations of naive RLVR on long-context scenarios by directly supervising the contextual grounding process. In addition, the superiority to SFT demonstrates the potential of RLVR as a compelling post-training approach for incentivizing long-context capabilities.

\vspace{-0.25cm}

\paragraph{Comparable to superior LLMs.}
Beyond outperforming direct RLVR, \ourMethod{} elevates LLMs to a exceptional performance tier, enabling them to surpass much larger conventional models and rival the latest specialized reasoning LLMs.
First, our~\ourMethod{} demonstrates remarkable parameter efficiency against larger, conventional LLMs. Our Qwen2.5-7B-1M model (79.22 on LongReason) significantly outperforms both the LLaMA-3.1-70B (57.59) and the Qwen2.5-72B-YaRN (72.76). Similarly, our 14B model (46.5 on LongBench v2) even surpass the performance of the 72B model, showcasing the ability to instill powerful long-context reasoning capabilities in a much smaller parameter footprint.
Second, \ourMethod{} empower conventional base models with exceptional long-context reasoning abilities that compete with and even surpass specialized models. Notably, our Qwen2.5-14B-1M, trained with LongRLVR, outperforming the newer Qwen3-14B (88.90 vs 87.60 on RULER-QA, 78.42 vs 77.50 on LongReason) which benefits from a more advanced backbone and post-training strategy. Moreover, our 14B model is comparable to the much larger QwenLong-L1-32B, which derives from the reasoning model, R1-Distilled-Qwen-32B, trained with long-context RLVR. This demonstrates the significant effectiveness our method to unlock superior long-context reasoning for non-reasoning LLMs.

\vspace{-0.2cm}

\subsection{Impact of Reward Components}

\vspace{-0.1cm}

In~\Cref{fig:overall_fig}, we demonstrate that our~\ourMethod{} overcomes the bottleneck of outcome-based RLVR by incorporating verifiable context rewards. To isolate the impact of the context reward, in~\Cref{fig:context_reward_only}, we compare the training of Qwen2.5-7B-1M with the full LongRLVR against using answer-only and context-only ($F_{\beta}$ score in~\Cref{eq:context_reward}) rewards, respectively. The results confirm our central hypothesis that the contextual recall of answer-only baseline quickly stagnates, thus creating a hard performance ceiling on both the training reward and the downstream task. Conversely, the model trained with context-only reward, despite involving a flat answer reward, shows rapid initial performance gains on the LongBench v2 benchmark. This demonstrates that mastering contextual grounding is a foundational capabilities that directly boosts long-context reasoning. However, without the final answer reward to steer reasoning toward a correct outcome, its downstream performance eventually degrades. While our~\ourMethod{} succeeds by synergizing both signals, hence achieving continually improved training answer reward and downstream tasks performance.


\subsection{Impact of Data Quality}
We study the impact of our two data quality strategies in our data synthetic pipeline: (1) using rejection sampling to select high-quality generated QA pairs, and (2) filtering out easy questions. We ablate these choices using the Qwen2.5-7B-1M model and report the overall score on LongBench v2. The results are shown in~\Cref{fig:data_quality}. First, \Cref{fig:data_quality} (left) shows that rejection sampling quality is critical. Using the best-rated samples achieves our top score of 38.6, which degrades significantly with median (36.6) and worst-rated (34.8) samples. Second, \Cref{fig:data_quality} (right) analyzes our filtering strategy. Our default method of filtering only easy questions proves most effective. Crucially, filtering out \textit{hard} questions is highly detrimental, causing performance to plummet to 35.8, nearly as low as applying no filtering at all (35.6). This suggests that these challenging examples are essential for enhancing the complex reasoning ability required for long-context tasks.

\begin{wrapfigure}{r}{0.5\columnwidth}
    \vspace{-0.cm}
    \centering
    \includegraphics[width=0.5\columnwidth]{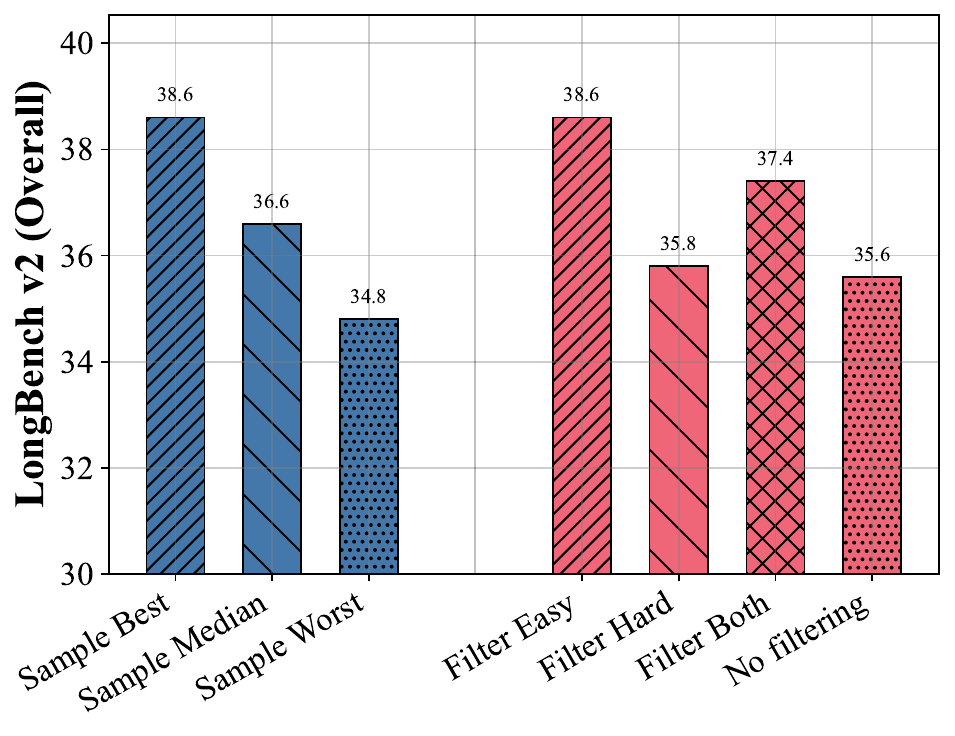}
    \vspace{-0.9cm}
    \caption{\textbf{Data quality ablation on LongBench v2.} Left: The effect of rejection sampling quality. Right: The effect of different data filtering strategies. High-quality, challenging data is shown to be most effective. Results are reported on Qwen2.5-7B-1M-\ourMethod{}.}
    \label{fig:data_quality}
\end{wrapfigure}

\begin{figure}[!t]
    \centering
    \begin{subfigure}{0.32\textwidth}
        \includegraphics[width=\linewidth]{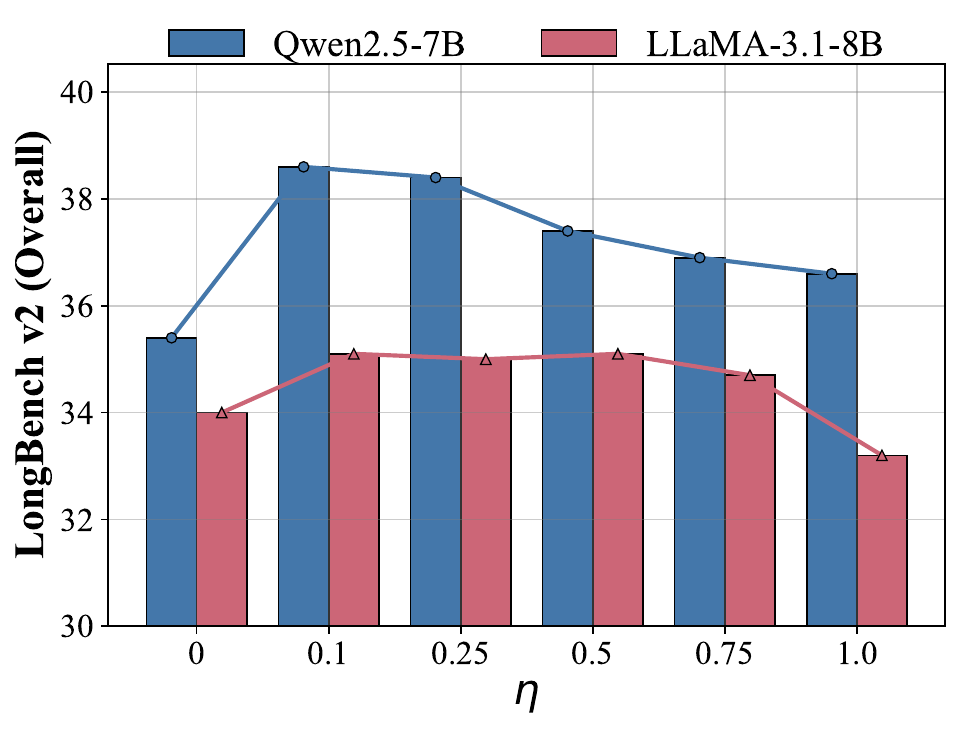}
        \caption{Effect of blending factor $\eta$.}
        \label{fig:ablation_eta}
    \end{subfigure}
    \hfill 
    \begin{subfigure}{0.32\textwidth}
        \includegraphics[width=\linewidth]{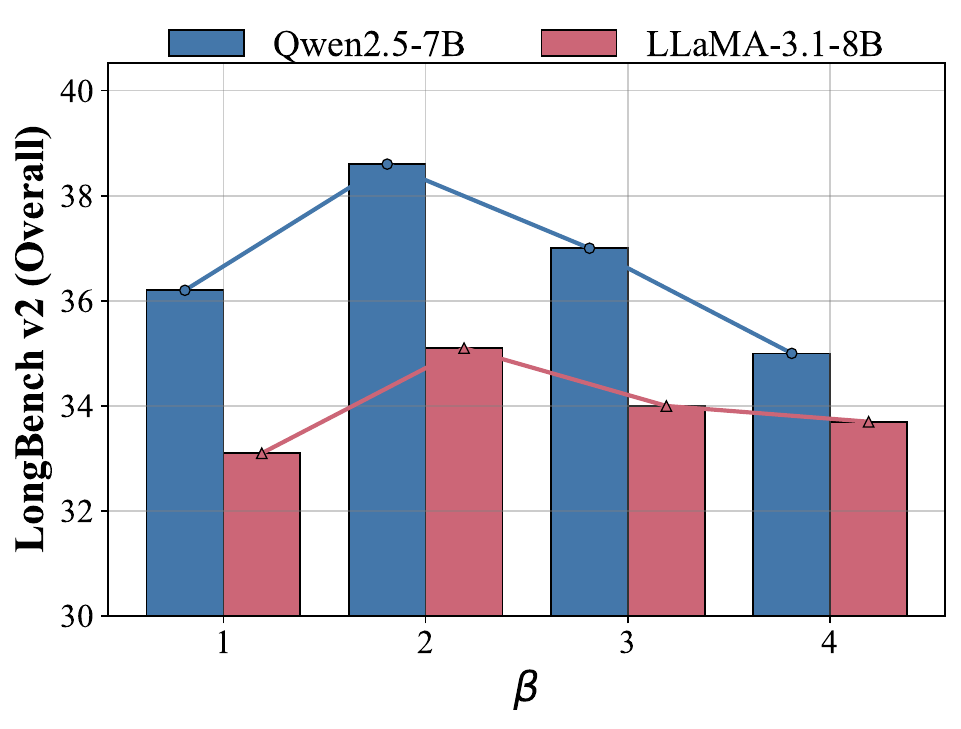}
        \caption{Effect of F-score parameter $\beta$.}
        \label{fig:ablation_beta}
    \end{subfigure}
    \hfill 
    \begin{subfigure}{0.32\textwidth}
        \includegraphics[width=\linewidth]{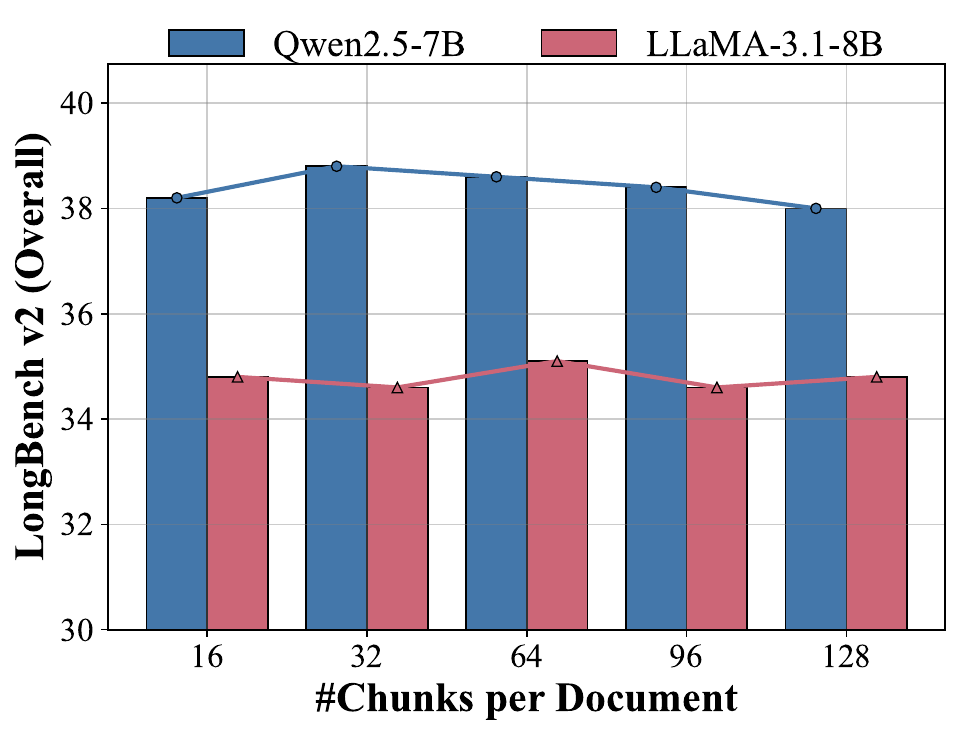}
        \caption{Robustness to chunk number.}
        \label{fig:ablation_chunks}
    \end{subfigure}
    \caption{Ablation studies on key hyperparameters for LongRLVR. We analyze the overall performance on LongBench v2 while varying (a) the blending factor $\eta$ in the context reward, (b) the F-score parameter $\beta$, and (c) the number of chunks per document. Results are reported for both Qwen2.5-7B and LLaMA-3.1-8B.}
    \label{fig:hyperparam_ablation}
\end{figure}

\subsection{Ablation Studies on Hyperparameters}


We further conduct ablation studies to analyze key hyperparameters in our method, with results shown in~\Cref{fig:hyperparam_ablation}.
(1) \textbf{Blending Factor $\eta$.} This factor balances the unconditional grounding reward ($F_{\beta}$) and the synergistic reward ($r_{\text{ans}} \cdot F_{\beta}$). \Cref{fig:ablation_eta} shows that performance peaks at a small, non-zero value ($\eta=0.1$). A purely synergistic reward ($\eta=0$) is suboptimal because the initial learning signal is too sparse. Conversely, a purely unconditional reward ($\eta=1$) decouples grounding from the final goal of producing a correct answer, hence leading to inferior effectiveness. (2) \textbf{F-score Parameter $\beta$.}
The $\beta$ parameter trades off recall and precision in the grounding reward. As shown in~\Cref{fig:ablation_beta}, performance is optimal at $\beta=2$. This moderately prioritizes recall, which is critical for complex reasoning where failing to retrieve a single essential piece of evidence can be catastrophic. A lower $\beta$ encourages an overly conservative policy that fails to retrieve all necessary chunks, while a higher $\beta$ incentivizes retrieving too much irrelevant context, which complicates the final reasoning step.
(3) \textbf{Robustness to Number of Chunks.}
\Cref{fig:ablation_chunks} demonstrates that~\ourMethod{} is remarkably robust to the number of chunks per document, maintaining high performance from 16 to 128 chunks per document during evaluation. This is a significant practical advantage over traditional retrieval systems, which are often highly sensitive to chunking strategy. This robustness indicates that the model learns a flexible semantic grounding policy rather than relying on surface-level features, allowing it to identify relevant information regardless of how it is segmented.

\vspace{-0.2cm}




    
\section{Related Work}

\paragraph{Reinforcement Learning with Verifiable Rewards (RLVR).}
Reinforcement Learning with Verifiable Rewards (RLVR) has emerged as a powerful paradigm for enhancing the reasoning of LLMs by rewarding models based on deterministic, ground-truth outcomes like passing unit tests or deriving a correct solution~\citep{lambert2024tulu,guo2025deepseek}. This approach has propelled models to expert-level (e.g., IMO-level mathematics) performance on complex, self-contained reasoning tasks such as mathematics and programming~\citep{guo2025deepseek,jaech2024openai,team2025kimi,huang2025gemini}. In these settings, the primary challenge is to refine the model's internal, parametric knowledge to discover a correct chain of thought~\citep{yue2025does,wen2025reinforcement}. However, the efficacy of this outcome-only reward structure is limited in long-context scenarios, where success hinges first on identifying relevant evidence from a vast external input—a process we term contextual grounding~\citep{wan2025qwenlong}. \gc{\citet{wang2025improving} incorporate retrieval reward in RLVR for the appearance of correct context in thinking process.} Our work directly addresses this gap by introducing a verifiable reward for the intermediate grounding process itself.

\vspace{-0.25cm}

\paragraph{Long Context Alignment.}

Previous studies successfully extended model context windows through methods like Rotary Position Embedding (RoPE) scaling~\citep{su2022roformer, chen2023extending, peng2023yarn, an2024training}. Yet, the extended models with long context windows often fail to reliably use the information in applications. To solve this, long-context alignment becomes crucial to unlock the model's latent capabilities by post training, which includes long-context SFT~\citep{bai2024longalign}, DPO~\citep{chen2025longpo}, and RLVR~\citep{wan2025qwenlong}. We investigate the challenges of applying RLVR in long-context settings and propose a novel framework that substantially enhances its efficacy for alignment.

\vspace{-0.25cm}

\gc{\paragraph{Long-Context LLM Agent.} Recent works~\citep{zhao2024longagent,qian2024long,zhang2024chain, zhou2024llm} propose utilizing agentic workflows to tackle long-context tasks. Instead of processing the full context via a single LLM pass, these methods split the text into chunks, processing them sequentially and integrating information through multi-turn collaboration, such as updating states in a chain~\citep{zhang2024chain}. These approaches circumvent the inherent limitations of long-context capabilities in standard LLMs, making them orthogonal to our contribution. Our work focuses on improving the model's native reasoning ability over the full long context. Furthermore, our approach is complementary: it has the potential to enhance agentic frameworks by enabling agents to process larger chunks per step, thereby scaling to even longer contexts.}
\section{Conclusion}

In this work, we addressed a fundamental limitation of Reinforcement Learning with Verifiable Rewards (RLVR) in long-context scenarios: its inability to effectively learn contextual grounding due to sparse, outcome-only rewards. We formally identified this issue as the ``vanishing grounding gradient'' problem, where the learning signal for retrieving evidence diminishes significantly with the complexity of the task.
To overcome this, we introduced \ourMethod{}, a novel training paradigm that augments the standard answer reward with a verifiable context reward. This dense reward signal explicitly teaches the model to first identify and extract relevant evidence before generating an answer. Our extensive experiments demonstrate that \ourMethod{} substantially outperforms both SFT and naive RLVR baselines across multiple models and benchmarks. 
Our analyses confirm that this success stems from the synergy between the context and answer rewards for both improved grounding and answer quality. By directly training models to ground their reasoning in provided evidence, \ourMethod{} provides a robust and effective framework for unlocking the long-context reasoning capabilities of LLMs.

\clearpage

\section*{Acknowledgments}

This project was partially supported by the Singapore Ministry
of Education Academic Research Fund Tier 1 (Award Number: T1 251RES2514) and MiroMind AI Research Intern Program. 

\section*{Reproducibility statement}

We have made extensive efforts to ensure the reproducibility of our work. All theoretical claims are formally proven in the appendix, with detailed, step-by-step derivations provided for both REINFORCE and GRPO estimators in ~\Cref{app:proofs}. The synthetic data generation pipeline, which is crucial for our method, is described in~\Cref{alg:data_generation} and further detailed in~\Cref{app:data_curation_app}, covering corpus sourcing, preprocessing, and quality control. All implementation details, including model specifics, training hyperparameters, and the learning strategy, are documented in~\Cref{subsec:impl}. The evaluation protocol, including baselines, benchmarks, and metrics, is clearly outlined in~\Cref{subsec:evaluation}. To facilitate direct replication of our results, we will release our source code, the generated dataset, and trained model checkpoints upon publication.

\section*{The Use of Large Language Models (LLMs)}
We utilized Large Language Models (LLMs), including Google's Gemini and OpenAI's GPT series, as assistive tools in the preparation of this manuscript. Their use was limited to the following tasks:
\begin{itemize}
    \item Generating Python code for the data visualizations in Figures~\ref{fig:overall_fig}, \ref{fig:context_reward_only}, \ref{fig:data_quality}, and \ref{fig:hyperparam_ablation}.
    \item Assisting with the LaTeX formatting of complex elements, particularly Table~\ref{tab:combined_results}.
    \item Proofreading and copy-editing the text for grammatical correctness and clarity.
\end{itemize}
The core research ideation, theoretical contributions, experimental design, and interpretation of results are entirely the work of the human authors. LLMs served strictly as productivity and presentation aids.

\bibliography{iclr2026_conference}
\bibliographystyle{iclr2026_conference}

\clearpage
\appendix
\section{Detailed Proofs for Propositions 1 and 2}
\label{app:proofs}

This appendix provides detailed derivations for the theoretical results presented in Section~\ref{sec:bottleneck} and Section~\ref{subsec:longrlvr_method}. We formally prove that outcome-only rewards lead to vanishing gradients for contextual grounding and show how the proposed context reward resolves this issue. The proofs are provided for both the standard REINFORCE policy gradient estimator and the Group-Relative Policy Optimization (GRPO) algorithm.

\subsection{Preliminaries and Notation}
We begin by summarizing the formal setup used throughout the proofs. The policy is factorized into a grounding head and an answer head, such that $\pi_\theta(y, Z \mid X, Q) = \pi_\theta^{\text{gnd}}(Z \mid X, Q) \cdot \pi_\theta^{\text{ans}}(y \mid X, Q, Z)$. Our analysis focuses on the gradients with respect to the parameters of the grounding head, $\pi_\theta^{\text{gnd}}$.

\paragraph{Grounding Head.} The long context $X$ is partitioned into a set of chunks $C=\{c_1,\ldots,c_N\}$. The grounding head models the selection of each chunk $c_j$ via a binary selection vector $Z=(z_1,\ldots,z_N)\in\{0,1\}^N$, where $z_j=\mathbf{1}\{c_j \text{ is selected}\}$.\gc{~We parameterize the grounding policy as a log-linear distribution over subsets}
\begin{equation}  
\label{eq:param_dis}
  \gc{
  \pi_\theta^{\mathrm{gnd}}(Z)
  = \frac{1}{Z(\theta)} \exp\!\Big(\sum_{j=1}^N s_j z_j + \psi(Z)\Big),
  }
\end{equation}
\gc{where $s_j$ is the logit associated with chunk $c_j$, $\psi(Z)$ is an arbitrary potential that can capture dependencies between chunks, and $Z(\theta)$ is the normalizing constant. This family subsumes the independent Bernoulli model used in the initial version of the paper as the special case $\psi(Z)\equiv 0$. We write $p_j = \mathbb{E}_\theta[z_j] = \Pr_\theta(c_j \in Z)$ for the marginal selection probability. Differentiating $\log \pi_\theta^{\mathrm{gnd}}(Z)$ with respect to $s_j$ yields the score function}
\[
  \gc{\nabla_{s_j} \log\pi_\theta^{\mathrm{gnd}}(Z) = z_j - p_j,}
\]
\gc{which is the only property of the policy we use in the subsequent analysis.}

\paragraph{Ground-Truth and Reward.} Let $G \subseteq C$ be the ground-truth set of essential evidence chunks required to answer the question, with $|G|=g$. We define the ``success'' event $S$ as the selection of all essential chunks, i.e., $S \equiv \{Z \supseteq G\}$. The probability of this event is $q \;\triangleq\; \Pr_\theta(S)$.\gc{~Under the \textbf{Sparse Answer Reward} (Assumption~\ref{ass:sparse}), the conditional expected answer reward can be written as $\mathbb{E}[r_{\text{ans}}\mid Z]=\mu_0+f(Z\cap G)$ for some monotone set function $f:2^{G}\!\rightarrow\mathbb{R}_{\ge 0}$ with $f(\emptyset)=0$. For each $c_j\in G$ and subset $T\subseteq G\setminus\{c_j\}$ we define the marginal gain}
\[
  \gc{\Delta_j(T)\triangleq f(T\cup\{c_j\})-f(T),}
\]
\gc{and assume it is bounded by $\Delta_j(T)\le \bar\delta_j$ for some constant $\bar\delta_j>0$. The all-or-nothing reward used in the initial version corresponds to $f(T)=\delta\cdot\mathbf{1}\{T\supseteq G\}$, where $\Delta_j(T)$ is non-zero only when $T$ already contains all other evidence in $G$.}

\gc{For the proof of Proposition~\ref{thm:ctx-signal}, we additionally use an \textbf{Additive Context Reward} of the form}
\[
  \gc{r_{\text{ctx}}(Z,G) = \sum_{c_k\in G} \alpha_k z_k, \qquad \alpha_k > 0,}
\]
\gc{so that the total reward is $r_{\text{total}} = r_{\text{ans}} + r_{\text{ctx}}$.}

\paragraph{Policy Gradient Estimators.} The gradient of an expected reward $\E[R(Z)]$ is computed using the REINFORCE identity (the score function estimator):
\begin{equation}
  \nabla_{s_j}\,\E[R(Z)] = \E\big[R(Z)\,\nabla_{s_j}\log \pi_\theta^{\mathrm{gnd}}(Z)\big] = \E\big[R(Z)\,(z_j-p_j)\big]. \label{eq:score-fn-app}
\end{equation}
Using a baseline $b$ that does not depend on $z_j$, this is equivalent to the covariance between the reward and the action score:
\begin{equation}
  \nabla_{s_j}\,\E[R(Z)] = \E\big[(R(Z)-b)\,(z_j-p_j)\big] = \Cov\big(R(Z),\,z_j\big). \label{eq:cov-identity-app}
\end{equation}

\subsection{Proof of Proposition 1: Vanishing Gradients for Outcome-Only Rewards}
\label{subsec:proof_1_app}

\textbf{Proposition 1.} \gc{\textit{Under Assumption~\ref{ass:sparse}, the gradient of the expected answer reward with respect to the logit $s_j$ for any essential chunk $c_j \in G$ satisfies}}
\[
    \gc{
    \nabla_{s_j} \E[r_{\text{ans}}] = \Cov\big(f(Z\cap G), z_j\big)
    = p_j(1-p_j)\big(\E[f(Z\cap G)\mid z_j{=}1]-\E[f(Z\cap G)\mid z_j{=}0]\big),
    }
\]
\gc{\textit{and is bounded as}}
\[
    \gc{
    0 \;\le\; \nabla_{s_j} \E[r_{\text{ans}}] \;\le\; p_j(1-p_j)\,\bar\delta_j\,\Pr_\theta(\mathcal{E}_j),
    }
\]
\gc{\textit{where $\mathcal{E}_j \triangleq \{Z:\ \Delta_j((Z\cap G)\setminus\{c_j\})>0\}$ is the activation event for $c_j$.}}

\begin{proof}[Proof using REINFORCE]
Using the covariance form of the policy gradient from~\Cref{eq:cov-identity-app}, we have
\[
\nabla_{s_j}\,\E[r_{\text{ans}}] = \Cov(r_{\text{ans}}, z_j)
= \Cov(\mu_0+f(Z\cap G), z_j)
= \Cov(f(Z\cap G), z_j).
\]
\gc{For a binary variable $z_j\in\{0,1\}$, the covariance admits the standard decomposition}
\[
\gc{
\Cov(f(Z\cap G), z_j)
= p_j(1-p_j)\big(\E[f(Z\cap G)\mid z_j{=}1]-\E[f(Z\cap G)\mid z_j{=}0]\big).
}
\]
\gc{To interpret the difference of conditionals, consider the subset of ground-truth chunks other than $c_j$ that are selected,
$T(Z) \triangleq (Z\cap G)\setminus\{c_j\} \subseteq G\setminus\{c_j\}$. When $z_j=1$ we can write}
\[
\gc{
f(Z\cap G) = f(T(Z)\cup\{c_j\}) = f\big(T(Z)\big) + \Delta_j\big(T(Z)\big),
}
\]
\gc{where $\Delta_j(T)$ is the marginal gain defined above. Taking expectations and subtracting the case $z_j=0$ yields}
\[
\gc{
\E[f(Z\cap G)\mid z_j{=}1]-\E[f(Z\cap G)\mid z_j{=}0]
= \E\big[\Delta_j(T(Z))\mid z_j{=}1\big].
}
\]
\gc{By monotonicity, $\Delta_j(T)\ge 0$, and by boundedness, $\Delta_j(T)\le \bar\delta_j$. Let $\mathcal{E}_j=\{Z:\Delta_j(T(Z))>0\}$ be the event that $c_j$ has a strictly positive marginal gain given the other selected evidence. We thus obtain}
\[
\gc{
0 \le \E[\Delta_j(T(Z))\mid z_j{=}1]
\le \bar\delta_j\,\Pr_\theta(\mathcal{E}_j\mid z_j{=}1) \le \bar\delta_j\,\Pr_\theta(\mathcal{E}_j)
}
\]
\gc{Substituting back gives the claimed upper bound
$\nabla_{s_j}\,\E[r_{\text{ans}}]
\le p_j(1-p_j)\,\bar\delta_j\,\Pr_\theta(\mathcal{E}_j)$, and the non-negativity of the gradient follows from the monotonicity of $f$.}
\end{proof}

\begin{proof}[Proof using GRPO]
GRPO uses a group-relative baseline. For a group of $K \ge 2$ i.i.d. trajectories, the unclipped GRPO surrogate gradient at $\theta = \theta_{\text{old}}$ is proportional to the covariance:
\[
  \nabla_{s_j}\,\mathcal{L}_{\text{GRPO}}(\theta_{\text{old}}) = \frac{K-1}{K} \Cov(r_{\text{ans}}, z_j)
  = \frac{K-1}{K}\,\nabla_{s_j}\,\E[r_{\text{ans}}].
\]
\gc{Therefore, the GRPO gradient inherits the same bound from Proposition~\ref{thm:ans-only}, i.e., it is also scaled by the activation probability $\Pr_\theta(\mathcal{E}_j)$ and becomes vanishingly small when $\Pr_\theta(\mathcal{E}_j)$ is tiny.}
\end{proof}


\subsection{Proof of Proposition 2: Non-Vanishing Grounding Signal}
\label{subsec:proof_2_app}

\textbf{Proposition 2.} \gc{\textit{For a total reward $r_{\text{total}} = r_{\text{ans}} + r_{\text{ctx}}$ where $r_{\text{ctx}}(Z,G)=\sum_{c_k\in G} \alpha_k z_k$ with $\alpha_k>0$, the gradient of the expected total reward with respect to the logit $s_j$ for any essential chunk $c_j \in G$ is}}
\[
    \gc{
    \nabla_{s_j} \E[r_{\text{total}}]
    = \nabla_{s_j}\,\E[r_{\text{ans}}]
    + \alpha_j\,\Var(z_j)
    + \sum_{\substack{k\neq j\\c_k\in G}}\alpha_k\,\Cov(z_k,z_j).
    }
\]
\gc{\textit{In particular, combining this with Proposition~\ref{thm:ans-only} shows that the answer-only part is at most $p_j(1-p_j)\,\bar\delta_j\,\Pr_\theta(\mathcal{E}_j)$, while the term $\alpha_j\,\Var(z_j)=\alpha_j\,p_j(1-p_j)$ is a dense contribution that does not depend on $\Pr_\theta(\mathcal{E}_j)$. If the grounding policy exhibits non-negative correlations among related chunks (so that $\Cov(z_k,z_j)\ge 0$ for $k\neq j$), then} }
\[
\gc{
\nabla_{s_j}\,\E[r_{\text{total}}] \;\ge\; \alpha_j\,\Var(z_j) = \alpha_j\,p_j(1-p_j) > 0
}
\]
\gc{\textit{whenever $p_j\in(0,1)$.}}

\begin{proof}[Proof using REINFORCE]
By linearity of expectation, the gradient decomposes: $\nabla_{s_j}\,\E[r_{\text{total}}] = \Cov(r_{\text{ans}}, z_j) + \Cov(r_{\text{ctx}}, z_j)$. From Proposition 1, we know $\Cov(r_{\text{ans}}, z_j) = \nabla_{s_j}\,\E[r_{\text{ans}}]$ for $j \in G$. We compute the contribution from the context reward, $r_{\text{ctx}}(Z,G) = \sum_{k\in G} \alpha_k z_k$:
\[
  \gc{
  \Cov(r_{\text{ctx}}, z_j)
  = \Cov\Big(\sum_{k\in G} \alpha_k z_k, z_j\Big)
  = \sum_{k\in G} \alpha_k\,\Cov(z_k,z_j)
  = \alpha_j\,\Var(z_j) + \sum_{\substack{k\neq j\\c_k\in G}}\alpha_k\,\Cov(z_k,z_j).
  }
\]
\gc{Substituting this expression for $\Cov(r_{\text{ctx}}, z_j)$ yields the claimed form for $\nabla_{s_j}\,\E[r_{\text{total}}]$. The term $\alpha_j\,\Var(z_j)=\alpha_j p_j(1-p_j)$ is always non-negative and does not depend on the rare activation event $\mathcal{E}_j$, so it provides a dense per-chunk learning signal even when the answer-only component is nearly zero. When related chunks tend to co-occur, the cross-covariances $\Cov(z_k,z_j)$ further amplify this signal.}
\end{proof}

\paragraph{Verification for GRPO and Direct Differentiation.}
\gc{The GRPO gradient is similarly scaled by $(K-1)/K$, yielding}
\[
  \gc{
  \nabla_{s_j}\,\mathcal{L}_{\text{GRPO}}(\theta_{\text{old}})
  = \frac{K-1}{K}\big(\nabla_{s_j}\,\E[r_{\text{ans}}] + \Cov(r_{\text{ctx}}, z_j)\big),
  }
\]
\gc{so the non-vanishing term $\alpha_j\,\Var(z_j)$ appears unchanged. In the special case where chunk selections are independent and all weights are equal ($\alpha_k \equiv \alpha$), we have $\Cov(z_k,z_j)=0$ for $k\neq j$ and $\Var(z_j)=p_j(1-p_j)$, giving}
\[
  \gc{
  \nabla_{s_j}\,\E[r_{\text{total}}]
  = \delta \cdot q(1-p_j) + \alpha \cdot p_j(1-p_j),
  }
\]
\gc{which matches the simpler formula reported in the main text of the initial submission. Under the same independence assumptions, direct differentiation of the expected total reward, $\E[r_{\text{total}}] = \mu_0 + \delta q + \alpha \sum_{k\in G} p_k$, also yields the same result.}

\section{Data Curation and Generation Details}
\label{app:data_curation_app}

This section provides a comprehensive overview of the pipeline used to generate the grounded long-context question-answering dataset for training \ourMethod{}.

\subsection{Corpus Sourcing and Preprocessing}
Our data generation process began with a large corpus of long documents from diverse domains, inspired by~\citet{gao2025prolong}. Book and arXiv documents were sourced from the Long-Data-Collection dataset, while code documents were sourced from the StarCoder dataset~\citep{li2023starcoder}, where all files within a repository were concatenated to form a single document. We filtered this raw corpus to retain only documents with token lengths between 8K and 64K tokens, as measured by the Qwen2.5 tokenizer. This step yielded an intermediate corpus of approximately 18K book, 16K arXiv, and 17K code documents.

\subsection{Document Segmentation and Semantic Clustering}
To prepare documents for evidence identification, each document was partitioned into exactly 64 segments. This process was sentence-aware, ensuring splits occurred only at natural text boundaries (e.g., after a period or a newline) to preserve the semantic integrity of each chunk. All segments were then embedded into a high-dimensional vector space using the BGE-M3 sentence encoder~\citep{bge-m3}. We applied the DBSCAN algorithm~\citep{dbscan} to the embeddings within each document, grouping semantically related segments into thematic clusters that would form the basis for targeted question generation.

\subsection{QA Generation and Quality Control}
We employed a multi-stage generation and filtering process to ensure the final dataset was of high quality. For each document, we randomly selected 4 distinct semantic clusters (with a minimum of 4 chunks each) and prompted the Qwen3-235B-A22B model~\citep{qwen3technicalreport} to generate 3 candidate $(Q, y, G)$ tuples per cluster, where $G$ is the set of evidence chunks the model deemed necessary. To maintain high standards, we used the same model as an automated judge to assign a quality rating from 1 to 10 for each generated pair, based on clarity, correctness, and evidence relevance. Both generation and judging used chain-of-thought prompting. A two-stage rejection sampling process then selected the single best QA pair for each document: first, we selected the top-scoring candidate within each cluster, and second, we selected the best among these four candidates. As a final quality filter, we discarded any pair that received a final rating below 9. This pipeline resulted in our final dataset of 46K documents, each paired with a single, high-quality, and well-grounded question-answer pair.

\begin{figure}[h]
    \centering
    \begin{tcolorbox}[colback=white, colframe=gray!80!black, title=\textbf{Dataset Example: Long Context QA}, fonttitle=\bfseries]
        
        \textbf{Question:} What factors contributed to the Mehrikans' eventual disappearance despite their unexpected military victory over the European armada?
        \vspace{0.2cm}
        
        \textbf{Answer:} Although they achieved a decisive naval victory against the European alliance (a war sparked by their own \textbf{commercial greed}), their civilization ultimately collapsed due to \textbf{drastic climatic shifts} and \textbf{physiological degeneration} (nervous diseases) that reduced their population from 90 million to 12 million.
        \vspace{0.3cm}
        \hrule
        \vspace{0.3cm}
        
        \textbf{Context (Excerpt):}
        \small
        
        \chunklabel{0} THE LAST AMERICAN By J. A. Mitchell ... [Illustration: "--In the soft earth was the imprint of human feet!"] ...
        
        [...]
        
        \labeledchunk{3}{He holds the opinion ... that the Mehrikans were a mongrel race ... wealth, luxury, and gradual decline of the native population; the \evidence{frightful climatic changes which swept the country like a mower's scythe}; ... all this is told by Noz-yt-ahl with force and accuracy.}
        
        [...]
        
        \labeledchunk{29}{"There were many causes," he answered. ... the effect of climate upon succeeding generations was fatal. \evidence{They became flat-chested and thin}, ... \evidence{Nervous diseases unknown to us wrought deadly havoc}. ... the \evidence{population decreased from ninety millions to less than twelve millions}. ... The \evidence{temperature would skip in a single day from burning heat to winter's cold}. No constitution could withstand it...}
        
        [...]
        
        \labeledchunk{59}{ I have spoken of the Mehrikans being a greedy race. And their greed, at last, resulted in this war. By means of one-sided laws ... \evidence{they secured for themselves a lion's share of all profits} ... until at last the leading powers of Europe combined in self-defence...}
        
        [...]
        
        \labeledchunk{61}{... It lasted just one summer afternoon. But the \evidence{Mehrikans it was who sent their enemies to the bottom}. And the sea beneath our feet is strewn with iron hulks.}
        
        \vspace{0.3cm}
        \hrule
        \vspace{0.2cm}
        \textbf{Reference Chunk IDs:} [3, 29, 59, 61]
        
    \end{tcolorbox}
    \caption{An example instance of training data. The answer requires synthesizing information about the war's outcome (Chunk 61), the war's cause (Chunk 59), and the specific biological and environmental causes of extinction (Chunk 29).}
\end{figure}


\end{document}